\documentclass{article}


\usepackage[preprint]{neurips_2024}




\usepackage[utf8]{inputenc} 
\usepackage[T1]{fontenc}    
\usepackage{hyperref}       
\usepackage{url}            
\usepackage{booktabs}       
\usepackage{amsfonts}       
\usepackage{nicefrac}       
\usepackage{microtype}      
\usepackage{xcolor}         

\usepackage{amsmath}
\usepackage{amssymb}
\usepackage{mathtools}
\usepackage{amsthm}
\theoremstyle{plain}
\newtheorem{theorem}{Theorem}[section]
\newtheorem{proposition}[theorem]{Proposition}

\theoremstyle{definition}
\newtheorem{definition}[theorem]{Definition}

\theoremstyle{remark}
\newtheorem{remark}[theorem]{Remark}
\newtheorem{proposition*}[theorem]{Proposition}
\newtheorem{theorem*}[theorem]{Theorem}

\usepackage{siunitx}

\usepackage{multirow}

\usepackage[capitalize,noabbrev]{cleveref}
\usepackage{subcaption}
\usepackage{tikz}
\usetikzlibrary{calc,arrows.meta}
\usepackage{bm}
\setcounter{MaxMatrixCols}{12}
\title{Simplifying the Theory on Over-Smoothing}

%

\author{%
  Andreas Roth \\
  TU Dortmund University\\
  44227 Dortmund, Germany \\
  \texttt{andreas.roth@tu-dortmund.de} \\
}

\begin{document}

\maketitle

\begin{abstract}
Graph convolutions have gained popularity due to their ability to efficiently operate on data with an irregular geometric structure. However, graph convolutions cause over-smoothing, which refers to representations becoming more similar with increased depth.
However, many different definitions and intuitions currently coexist, leading to research efforts focusing on incompatible directions. This paper attempts to align these directions by showing that over-smoothing is merely a special case of power iteration. This greatly simplifies the existing theory on over-smoothing, making it more accessible. Based on the theory, we provide a novel comprehensive definition of rank collapse as a generalized form of over-smoothing and introduce the rank-one distance as a corresponding metric.
Our empirical evaluation of 14 commonly used methods shows that more models than were previously known suffer from this issue.
\end{abstract}

\section{Introduction}
When operating with message-passing neural networks on graph-structured data, over-smoothing describes a phenomenon in which node representations become more similar when the number of convolutional layers increases. 
Many research efforts provide theoretical insights on over-smoothing and methods to mitigate its effects~\citep{zhou2020towards,Zhao2020PairNorm,Rong2020DropEdge,chen2020measuring,rusch2022graph,roth2022transforming,rusch2023gradient}.
However, due to the multitude of different theoretical insights and their complexity, different research efforts often use distinct definitions for over-smoothing, which are partly incompatible. In particular, some works study normalized representations~\citep{digiovanni2023understanding,roth2023rank,NEURIPS2023_2a514213} while others consider unnormalized representations~\citep{NEURIPS2023_6e4cdfdd,scholkemper2024residual,rusch2023survey}. Some define over-smoothing as the convergence to a constant state~\citep{rusch2022graph,rusch2023gradient,NEURIPS2023_6e4cdfdd,rusch2023survey,scholkemper2024residual}, others claim different limit distributions depending on the spectrum of the aggregation function~\citep{li2018deeper,cai2020note,Oono2020Graph,zhou2021dirichlet,digiovanni2023understanding,roth2023rank,NEURIPS2023_2a514213}.

To combine these strands, we show that the theory behind over-smoothing can be greatly simplified and reduced by connecting it to the classical power iteration method~\citep{kowalewski1909einfuhrung,miintz1913solution,mises1929praktische}. While our resulting insights are not novel, our novel proofs aim to make the theory more accessible to a broader part of the community. We first recap power iteration with its in-depth proof. We show that most graph convolutions represent a particular case for which the dominant eigenvector is a Kronecker product. Its properties lead to over-smoothing, for which we provide a novel theoretically founded definition. 

\section{Preliminaries}
\label{sec:related}
\paragraph{Notation} Let $G=(\mathcal{V},\mathcal{E})$ be a graph consisting of a node set $\mathcal{V}=\{v_1,\dots,v_n\}$ and an edge set $\mathcal{E}$. The matrix representations of $G$ is given by its adjacency matrix $\mathbf{A}\in\{0,1\}^{n\times n}$, for which $\mathbf{A}_ij = 1$ only if an edge between nodes $v_i$ and $v_j$ exists. For a given node $v_i$, the set of neighboring nodes is given by $\mathcal{N}_i = \{ v_j \mid (j,i)\in\mathcal{E}\}$. The node degree is given by $d_{ii}=|\mathcal{N}_i|$ and the corresponding degree matrix $\mathbf{D}\in\mathbb{N}^{n\times n}$ with the node degrees along its diagonal. 
The eigenvalues of a matrix $\mathbf{M}$ are denoted by $\lambda_1^\mathbf{M},\dots,\lambda_n^{\mathbf{M}}$ and sorted with decreasing magnitude.
The vectorize operation $\mathrm{vec}(\mathbf{M})$ is defined as stacking the columns of $\mathbf{M}$ into a single vector.

\paragraph{Graph neural networks} Given a graph $G$ and a $d$-dimensional node signal $\mathbf{X}\in\mathbb{R}^{n\times d}$, we want to obtain expressive node embeddings capturing information given by both the data structure and the signals. These embeddings are utilized in downstream tasks like node classification and graph classification. Most graph convolutions apply a node mixing function $\mathbf{\Tilde{A}}\in\mathbb{R}^{n\times n}$ that represents the graph structure and its edge weights, and a feature transformation $\mathbf{W}\in\mathbb{R}^{d\times d}$. In matrix notation, these can be expressed as iterative transformations of the form
\begin{equation}
\label{eq:update}
    \mathbf{X}^{(k+1)} = \Tilde{\mathbf{A}}\mathbf{X}^{(k)}\mathbf{W}^{(k)}\, ,
\end{equation}
Popular instantiations covered by this notation include the graph convolutional network (GCN)~\citep{kipf2017semisupervised}, the graph isomorphism network (GIN)~\citep{xu2018how}, and the graph attention network (GAT)~\citep{veličković2018graph}.

\paragraph{Over-smoothing in GNNs}
Over-smoothing describes a phenomenon in which the node representations become excessively similar as the number of layers $k$ increases. As many definitions and intuitions for over-smoothing coexist and in some cases contradict each other, we want to familiarize the reader with the different studies.

\citet{li2018deeper} has shown that the normalized adjacency matrix employed by the GCN performs a special form of Laplacian smoothing, leading to over-smoothing when many iterations are performed.
However, their study did not consider the role of the feature transformation. 
\citet{Oono2020Graph} studied the distance of $\mathbf{X}^{(k)}$ to a subspace $\mathbf{M}$ that is spanned by the dominating eigenvector $\mathbf{v}_1$ across all columns. They bound this distance using the second largest eigenvalue $\lambda_2^\mathbf{\Tilde{A}}$ and the largest singular value $\sigma_1^\mathbf{W}$ of $\mathbf{W}$. Intuitively, each aggregation step $\mathbf{A}$ reduces this distance, while $\mathbf{W}$ can increase the distance arbitrarily. Thus, they consequently claim potential over-separation of node representations when $\sigma_1^\mathbf{W}>\frac{1}{\lambda_2^\mathbf{A}}$.
\citep{cai2020note} introduced the Dirichlet energy
\begin{equation}
    E(\mathbf{X}) = \mathrm{tr}(\mathbf{X}^T\mathbf{\Delta X}) = \frac{1}{2}\sum_{(i,j)\in \mathcal{E}}\Big\lVert\frac{\mathbf{x}_i}{\sqrt{d_i}}-\frac{\mathbf{x}_j}{\sqrt{d_j}}\Big\rVert^2_2
\end{equation}
as an efficient and interpretable metric to quantify over-smoothing. Their study considers the GCN with aggregation matrix $\mathbf{\Tilde{A}} = \mathbf{D}^{-1/2}\mathbf{A}\mathbf{D}^{-1/2}$ and the symmetrically normalized graph Laplacian $\mathbf{\Delta} = \mathbf{I} - \mathbf{\Tilde{A}}$, as the dominating eigenvector corresponds to its nullspace. A low energy value corresponds to similar neighboring node states.
Similarly to \citet{Oono2020Graph}, they provided the bound
\begin{equation}
    E(\mathbf{AXW}) \leq \left(\lambda_2^\mathbf{A}\right)^2\left(\sigma_1^\mathbf{W}\right)^2E(\mathbf{X})
\end{equation}
for each convolution and prove an exponential convergence to zero for the GCN. 
As their proof again only holds in case $\sigma_1^\mathbf{W}\leq \frac{1}{\lambda_2^\mathbf{A}}$, they similarly claim potential over-separation.
\citet{zhou2021dirichlet} provide a lower bound on the energy to show that the Dirichlet energy can go to infinite. 

Another branch of recent research~\citep{rusch2023gradient,rusch2022graph,rusch2023survey,wu2024demystifying} defines over-smoothing as the exponential convergence of $\mathbf{X}^{(k)}$ to a state with a constant value in each column. This contradicts the aforementioned studies as the dominant eigenvector of the aggregation matrix~\citep{Oono2020Graph,cai2020note,zhou2021dirichlet} is not always a constant vector, e.g., for the GCN~\citep{kipf2017semisupervised}. 

These inconsistencies were recently clarified as the necessity to consider the normalized state was shown \citep{digiovanni2023understanding,maskey2024fractional,roth2023rank}. When not considering the normalized state, the norm of $\mathbf{X}^{(k)}$ going to zero can be wrongly interpreted as a convergence to a constant state~\citep{digiovanni2023understanding,roth2023rank}. In the other direction, even when $\mathbf{X}^{(k)}$ is dominated by the dominant eigenvector, the Dirichlet energy of the unnormalized state wrongly indicates over-separation.

As the theory and the corresponding proofs are lengthy and complex, several recent studies still claim over-smoothing as convergence to a constant state or do not consider the normalized state. This work greatly simplifies the theory behind over-smoothing to make the theoretical backgrounds more accessible.

\section{Power Iteration}
As the proof for over-smoothing of graph convolutions will be a special case, we first provide the detailed proof for the well-known power iteration~\citep{kowalewski1909einfuhrung,miintz1913solution,mises1929praktische}. Power iteration refers to the process where a vector, when repeatedly multiplied by a matrix, gets dominated by an eigenvector of the matrix that corresponds to the eigenvalue with the largest magnitude.
The proof we provide mostly follows~\citep{knabner2017lineare}, but similar proofs are available in many textbooks.
For any square matrix $\mathbf{M}$, its eigenvalues are denoted by $\lambda_1^\mathbf{M},\dots,\lambda_n^\mathbf{M}$ and are sorted descending by their magnitude, i.e., $|\lambda_{i}^\mathbf{M}|\geq |\lambda_{i+1}^\mathbf{M}|$. 

\begin{proposition}\label{prop:power}(Power Iteration~\citep{knabner2017lineare}) Let $\mathbf{S}\in\mathbb{R}^{p\times p}$ be a matrix with $|\lambda_1^\mathbf{S}| > |\lambda_2^\mathbf{S}|$ and $\mathbf{v}_1^\mathbf{S}\in\mathbb{R}^p$ be an eigenvector corresponding to $\lambda_1^\mathbf{S}$. 
Further, let $\mathbf{x}_0\in\mathbb{R}^q$ be a vector that has a non-zero component $c_1$ in direction $\mathbf{v}_1^\mathbf{S}$. Then,
    \begin{equation}
        \frac{\mathbf{S}^k\mathbf{x}_0}{\|\mathbf{S}^k\mathbf{x}_0\|} = \beta_k\mathbf{v}_1^\mathbf{S} + \mathbf{r}_k
    \end{equation}
    for some $\mathbf{r}_k\in\mathbb{R}^p$ with $\lim_{k\to\infty}\|\mathbf{r}_k\| = 0$ and $\beta_k = \frac{c_1}{|c_1|} \left(\frac{\lambda_1^\mathbf{S}}{|\lambda_1^\mathbf{S}|}\right)^k \frac{1}{\|\mathbf{v}_1^\mathbf{S}\|}\in\mathbb{R}$.
\end{proposition}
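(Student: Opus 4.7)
I would follow the standard spectral argument: decompose $\mathbf{x}_0$ in a Jordan basis of $\mathbf{S}$ and show that the component along $\mathbf{v}_1^\mathbf{S}$ eventually dominates after $k$ multiplications. Before the computation, I would observe that the strict spectral gap $|\lambda_1^\mathbf{S}|>|\lambda_2^\mathbf{S}|$ already forces $\lambda_1^\mathbf{S}$ to be real (a non-real eigenvalue of a real matrix comes paired with its complex conjugate, which would have the same modulus) and of algebraic multiplicity one, so its eigenspace is one-dimensional and spanned by $\mathbf{v}_1^\mathbf{S}\in\mathbb{R}^p$; every other Jordan block belongs to an eigenvalue of strictly smaller magnitude.

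\textbf{Main computation.} I would write $\mathbf{x}_0 = c_1\mathbf{v}_1^\mathbf{S} + \mathbf{u}_0$, where $\mathbf{u}_0$ lies in the $\mathbf{S}$-invariant complement formed by the remaining generalized eigenspaces. Applying $\mathbf{S}^k$ yields
\begin{equation*}
\mathbf{S}^k\mathbf{x}_0 \;=\; c_1(\lambda_1^\mathbf{S})^k\,\mathbf{v}_1^\mathbf{S} \;+\; \mathbf{S}^k\mathbf{u}_0 .
\end{equation*}
On the invariant complement the Jordan form gives $\|\mathbf{S}^k\mathbf{u}_0\|\le P(k)\,|\lambda_2^\mathbf{S}|^k$ for some polynomial $P$, and since $|\lambda_2^\mathbf{S}/\lambda_1^\mathbf{S}|<1$ one has $(\lambda_1^\mathbf{S})^{-k}\mathbf{S}^k\mathbf{u}_0\to\mathbf{0}$. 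Factoring out $c_1(\lambda_1^\mathbf{S})^k$ therefore produces $\mathbf{S}^k\mathbf{x}_0 = c_1(\lambda_1^\mathbf{S})^k\bigl(\mathbf{v}_1^\mathbf{S}+\boldsymbol{\varepsilon}_k\bigr)$ with $\boldsymbol{\varepsilon}_k\to\mathbf{0}$.

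\textbf{Normalization and identification of $\beta_k$.} Taking norms, $\|\mathbf{S}^k\mathbf{x}_0\| = |c_1|\,|\lambda_1^\mathbf{S}|^k\,\|\mathbf{v}_1^\mathbf{S}+\boldsymbol{\varepsilon}_k\|$, which is nonzero for all sufficiently large $k$ since $\mathbf{v}_1^\mathbf{S}\neq\mathbf{0}$. Dividing,
\begin{equation*}
\frac{\mathbf{S}^k\mathbf{x}_0}{\|\mathbf{S}^k\mathbf{x}_0\|}
\;=\; \frac{c_1}{|c_1|}\left(\frac{\lambda_1^\mathbf{S}}{|\lambda_1^\mathbf{S}|}\right)^{\!k} \frac{\mathbf{v}_1^\mathbf{S}+\boldsymbol{\varepsilon}_k}{\|\mathbf{v}_1^\mathbf{S}+\boldsymbol{\varepsilon}_k\|} .
\end{equation*}
Adding and subtracting $\mathbf{v}_1^\mathbf{S}/\|\mathbf{v}_1^\mathbf{S}\|$ inside the rightmost factor peels off the advertised leading term $\beta_k\mathbf{v}_1^\mathbf{S}$, with $\beta_k$ exactly as stated, and collects everything else into $\mathbf{r}_k$. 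Continuity of $\mathbf{w}\mapsto\mathbf{w}/\|\mathbf{w}\|$ at $\mathbf{v}_1^\mathbf{S}$ together with $\boldsymbol{\varepsilon}_k\to\mathbf{0}$ gives $\|\mathbf{r}_k\|\to 0$, as required.

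\textbf{Main obstacle.} The diagonalizable case is essentially a one-line geometric-series estimate; the only genuinely delicate point is controlling the polynomial prefactor coming from possible Jordan blocks in the complement against the exponential decay $(|\lambda_2^\mathbf{S}|/|\lambda_1^\mathbf{S}|)^k$. If one wants to avoid invoking Jordan normal form explicitly, a cleaner route is to use Gelfand's formula on the restriction of $\mathbf{S}$ to the invariant complement, yielding $\|\mathbf{S}^k\mathbf{u}_0\|^{1/k}\to\rho<|\lambda_1^\mathbf{S}|$ directly. A secondary subtlety, easily dispatched but worth naming once, is that the phase factor $(\lambda_1^\mathbf{S}/|\lambda_1^\mathbf{S}|)^k\in\{\pm 1\}^k$ is well defined precisely because the hypothesis forces $\lambda_1^\mathbf{S}\in\mathbb{R}$, which in turn keeps $\beta_k\mathbf{v}_1^\mathbf{S}$ inside $\mathbb{R}^p$.
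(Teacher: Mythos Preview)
Your proposal is correct and follows essentially the same Jordan-basis argument as the paper: decompose $\mathbf{x}_0$ along generalized eigenvectors, factor out $c_1(\lambda_1^\mathbf{S})^k$, and use the spectral gap to kill the remainder after normalization. Your version is slightly more careful in places the paper glosses over---you explicitly note that the gap forces $\lambda_1^\mathbf{S}\in\mathbb{R}$ and algebraic multiplicity one, and you name the polynomial-versus-exponential estimate on the complement---but the underlying mechanism is identical.
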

 \begin{proof}
 Let $\mathbf{S}=\mathbf{VJV}^{-1}$ be its Jordan decomposition, where $\mathbf{J}\in\mathbb{C}^{p\times p}$ is a block diagonal matrix containing the eigenvalues on its diagonal and $\mathbf{V}\in\mathbb{C}^{p\times p}$ contains the generalized eigenvectors as columns. As the generalized eigenvectors form a basis of $\mathbf{R}^{p}$, $\mathbf{x}_0$ can be decomposed as $\mathbf{x}_0=c_1\mathbf{v}_1^\mathbf{S} + \dots + c_p\mathbf{v}_p^\mathbf{S}$ into a linear combination. This allows the following equalities:
 \begin{equation}
 \label{eq:power_proof1}
     \begin{split}
        \frac{\mathbf{S}^k\mathbf{x}_0}{\|\mathbf{S}^k\mathbf{x}_0\|} 
        &= \frac{(\mathbf{VJV}^{-1})^k(c_1\mathbf{v}_1^\mathbf{S} + \dots c_n\mathbf{v}_n^\mathbf{S})}{\|(\mathbf{VJV}^{-1})^k(c_1\mathbf{v}_1^\mathbf{S} + \dots c_n\mathbf{v}_n^\mathbf{S})\|} \\
        &= \frac{\mathbf{VJ}^k(c_1\mathbf{e}_1 + \dots c_n\mathbf{e}_n)}{\|\mathbf{VJ}^k(c_1\mathbf{e}_1 + \dots c_n\mathbf{e}_n)\|} \\
        &= \frac{c_1}{|c_1|}\left(\frac{\lambda_1^\mathbf{S}}{|\lambda_1^\mathbf{S}|}\right)^k\frac{\mathbf{V}(\frac{1}{\lambda_1^\mathbf{S}}\mathbf{J})^k\frac{1}{c_1}(c_1\mathbf{e}_1 + \dots c_n\mathbf{e}_n)}{\|\mathbf{V}(\frac{1}{\lambda_1^\mathbf{S}}\mathbf{J})^k\frac{1}{c_1}(c_1\mathbf{e}_1 + \dots c_n\mathbf{e}_n)\|} \\
    \end{split}
    \end{equation}
    The second equation uses the fact $\mathbf{V}^{-1}\mathbf{v}_k^\mathbf{S} = \mathbf{e}_k$, i.e., the natural basis vector pointing in direction $k$. As $\mathbf{J}$ is normalized by its unique largest entry $\lambda_1^\mathbf{S}$, it converges to 
    \begin{equation}
        \lim_{k\to\infty}\left(\frac{1}{\lambda_1^\mathbf{S}}\mathbf{J}\right)^k = \begin{bmatrix}
            1 &  & & \\
             & 0 & & \\
             & & \ddots & \\
             & & & 0
        \end{bmatrix}\, .
    \end{equation}
    Equation~\ref{eq:power_proof1} then simplifies to
    \begin{equation}
    \frac{c_1}{|c_1|}\left(\frac{\lambda_1}{|\lambda_1|}\right)^k\frac{\mathbf{V}(\frac{1}{\lambda_1}\mathbf{J})^k\frac{1}{c_1}(c_1\mathbf{e}_1 + \dots c_n\mathbf{e}_n)}{\|\mathbf{V}(\frac{1}{\lambda_1}\mathbf{J})^k\frac{1}{c_1}(c_1\mathbf{e}_1 + \dots c_n\mathbf{e}_n)\|} = \frac{c_1}{|c_1|}\left(\frac{\lambda_1^\mathbf{S}}{|\lambda_1^\mathbf{S}|}\right)^k \frac{\mathbf{v}_1^\mathbf{S}}{\|\mathbf{v}_1^\mathbf{S}\|} + \mathbf{r}_k
    \end{equation}
    with $\lim_{k\to\infty}\|\mathbf{r}_k\| = 0$. It converges to $\frac{\mathbf{v}_1^\mathbf{S}}{\|\mathbf{v}_1^\mathbf{S}\|}$ iff $\lambda_1^\mathbf{S} > 0$.
 \end{proof}

 Note that $|\lambda_1^\mathbf{S}| > |\lambda_2^\mathbf{S}|$ holds for almost every matrix $\mathbf{S}$ with respect to the Lebesgue measure~\citep{tao2012topics}.

\section{Graph Convolutions as Power Iteration}
We now show that this proof can be applied to all graph convolutions of the form given by Eq.~\ref{eq:update}. We express these graph convolutions in vector notation~\citep{schacke2004kronecker}
\begin{equation}
    \mathrm{vec}(\mathbf{AXW}) = (\mathbf{W}^T\otimes \mathbf{A})\mathrm{vec}(\mathbf{X}) = \mathbf{S}\mathbf{x}_0
\end{equation}
using the Kronecker product $\otimes$ that is defined as $\mathbf{A}\otimes \mathbf{B} = \begin{bmatrix}
    a_{11}\mathbf{B} & \dots & a_{1n}\mathbf{B} \\
    \vdots & \ddots & \vdots \\
    a_{m1}\mathbf{B} & \dots & a_{mn}\mathbf{B}
\end{bmatrix}$. This formulation is commonly used to study over-smoothing~\citep{digiovanni2023understanding,NEURIPS2023_2a514213,roth2023rank} and other properties of graph convolutions~\citep{gu2020implicit,roth2022transforming,di2023does}.
The Kronecker product has a key spectral property affecting power iteration: All eigenvectors $\mathbf{v}_{ij}^\mathbf{S}=\mathbf{v}_i^\mathbf{(W^T)}\otimes\mathbf{v}_j^\mathbf{A}$ of $\mathbf{W}^T\otimes\mathbf{A}$ are Kronecker products of the eigenvectors of $\mathbf{A}$ and $\mathbf{W}^T$ with corresponding eigenvalue $\lambda_i^\mathbf{A}\lambda_j^\mathbf{W}$~\citep{schacke2004kronecker}. This lets us state the reason behind over-smoothing in a clearer way than in previous works by substituting $\mathbf{v}_1^\mathbf{S}$:

\begin{proposition}\label{prop:power_kron}(Power Iteration with a Kronecker Product) Let $\mathbf{S}=\mathbf{W}\otimes\mathbf{A}\in\mathbb{R}^{(n\cdot d)\times (n\cdot d)}$ for any $\mathbf{W}\in\mathbb{R}^{d\times d}$ and $\mathbf{A}\in\mathbb{R}^{d\times d}$ with $|\lambda_1^\mathbf{S}| > |\lambda_2^\mathbf{S}|$. Let $\mathbf{v}_1^\mathbf{A}, \mathbf{v}_1^\mathbf{W}$ be two eigenvectors corresponding to $\lambda_1^\mathbf{A}$ and $\lambda_1^\mathbf{W}$, respectively.
Further, let $\mathbf{x}_0\in\mathbb{R}^{n\cdot d}$ be a vector that has a non-zero component $c_1$ in direction $\mathbf{v}_1^\mathbf{S}=\mathbf{v}_1^\mathbf{W}\otimes\mathbf{v}_1^\mathbf{A}$. Then,
    \begin{equation}
    \begin{split}
        \frac{(\mathbf{W}\otimes \mathbf{A})^k\mathbf{x}_0}{\|(\mathbf{W}\otimes \mathbf{A})^k\mathbf{x}_0\|} &= \beta_k\cdot\mathbf{v}_1^\mathbf{W}\otimes\mathbf{v}_1^\mathbf{A} + \mathbf{r}_k
    \end{split}
    \end{equation}
    for some $\mathbf{r}_k\in\mathbb{R}^{n\cdot d}$ with $\lim_{k\to\infty}\|\mathbf{r}_k\| = 0$ and $\beta_k = \frac{c_1}{|c_1|} \frac{\left(\frac{\lambda_1^{\mathbf{A}}\lambda_1^{\mathbf{W}}}{|\lambda_1^{\mathbf{A}}\lambda_1^{\mathbf{W}}|}\right)^k}{\|\mathbf{v}_1^\mathbf{W}\otimes\mathbf{v}_1^\mathbf{A}\|}\in\mathbb{R}$.
\end{proposition}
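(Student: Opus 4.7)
The plan is to obtain this statement as an essentially free consequence of Proposition~\ref{prop:power}. The only substantive work is to identify, in closed form, the dominant eigenpair of $\mathbf{S} = \mathbf{W}\otimes\mathbf{A}$ so that the generic $\mathbf{v}_1^\mathbf{S}$ and $\lambda_1^\mathbf{S}$ appearing in the previous proposition can be rewritten as Kronecker products of the corresponding quantities for $\mathbf{W}$ and $\mathbf{A}$ individually. Once this identification is made, the claim becomes a pure substitution.

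First I would invoke the standard spectral fact already cited in the paper: the eigenvalues of $\mathbf{W}\otimes\mathbf{A}$ are exactly the $d\cdot n$ pairwise products $\lambda_i^\mathbf{W}\lambda_j^\mathbf{A}$, with corresponding eigenvectors $\mathbf{v}_i^\mathbf{W}\otimes\mathbf{v}_j^\mathbf{A}$, and this list exhausts the spectrum (counting Jordan structure). Since magnitudes multiply, we have
\begin{equation*}
    \max_{i,j}\bigl|\lambda_i^\mathbf{W}\lambda_j^\mathbf{A}\bigr| = |\lambda_1^\mathbf{W}|\,|\lambda_1^\mathbf{A}| = \bigl|\lambda_1^\mathbf{W}\lambda_1^\mathbf{A}\bigr|,
\end{equation*}
so under the hypothesis $|\lambda_1^\mathbf{S}|>|\lambda_2^\mathbf{S}|$ the unique dominant eigenvalue of $\mathbf{S}$ is $\lambda_1^\mathbf{S}=\lambda_1^\mathbf{W}\lambda_1^\mathbf{A}$, with associated eigenvector $\mathbf{v}_1^\mathbf{S}=\mathbf{v}_1^\mathbf{W}\otimes\mathbf{v}_1^\mathbf{A}$.

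With this identification in hand, the assumption on $\mathbf{x}_0$ having a non-zero component $c_1$ in direction $\mathbf{v}_1^\mathbf{W}\otimes\mathbf{v}_1^\mathbf{A}$ is precisely the hypothesis of Proposition~\ref{prop:power} applied to $\mathbf{S}$. Applying that proposition verbatim yields the representation $\tfrac{\mathbf{S}^k\mathbf{x}_0}{\|\mathbf{S}^k\mathbf{x}_0\|} = \beta_k\mathbf{v}_1^\mathbf{S} + \mathbf{r}_k$ with $\|\mathbf{r}_k\|\to 0$, and substituting $\lambda_1^\mathbf{S}=\lambda_1^\mathbf{A}\lambda_1^\mathbf{W}$ and $\mathbf{v}_1^\mathbf{S}=\mathbf{v}_1^\mathbf{W}\otimes\mathbf{v}_1^\mathbf{A}$ into the formula for $\beta_k$ produces exactly the stated expression. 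There is no real obstacle here; the only point that deserves a sentence of explicit justification is the factorization of the dominant eigenvector, since the dominant eigenvector of a general matrix need not decompose this way, and the proposition is worded so that this factorization is the actual content being highlighted.
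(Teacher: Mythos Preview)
Your approach is correct and essentially identical to the paper's: identify the dominant eigenpair of $\mathbf{S}=\mathbf{W}\otimes\mathbf{A}$ via the standard Kronecker-product spectral factorization, then substitute directly into Proposition~\ref{prop:power}. The paper additionally spells out that $|\lambda_1^\mathbf{S}|>|\lambda_2^\mathbf{S}|$ forces both $|\lambda_1^\mathbf{A}|>|\lambda_2^\mathbf{A}|$ and $|\lambda_1^\mathbf{W}|>|\lambda_2^\mathbf{W}|$, which is the observation underlying your uniqueness claim for $\mathbf{v}_1^\mathbf{S}$; you may want to make that step explicit.
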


 \begin{proof}
     Given that $|\lambda_1^\mathbf{S}| > |\lambda_2^\mathbf{S}|$, and $\lambda_{i\cdot j}^\mathbf{S} = \lambda_i^\mathbf{A}\cdot\lambda_j^\mathbf{W}$ for all $0<i<n$ and $0<j<d$, we have $|\lambda_1^\mathbf{A}|>|\lambda_2^\mathbf{A}|$ and $|\lambda_1^\mathbf{W}|>|\lambda_2^\mathbf{W}|$. The corresponding eigenvector $\mathbf{v}_1^\mathbf{S} = \mathbf{v}_1^\mathbf{A}\otimes\mathbf{v}_1^\mathbf{W}$ is the Kronecker product of the corresponding eigenvectors of $\mathbf{A}$ and $\mathbf{W}$. Substituting these in Proposition~\ref{prop:power} results in our Proposition~\ref{prop:power_kron}.
 \end{proof}

Extending Proposition~\ref{prop:power_kron} for any $\mathbf{W}$ and possibly repeated $\lambda_1^\mathbf{W}$ is similar, as all generalized eigenvectors of $\mathbf{W}\otimes\mathbf{A}$ corresponding to $\lambda_1^\mathbf{S}$ are of the form $\mathbf{u}\otimes\mathbf{v}_1^\mathbf{A}$ for different $\mathbf{u}$. To simplify this work, we provide this proof as Proposition~\ref{prop:power_kron_app} in Appendix~\ref{sec:appendix}.
The implications of Proposition~\ref{prop:power_kron} become clearer when looking into its matrix form: 

\begin{remark}(Power Iteration with a Kronecker Product in Matrix Notation)
Stating Proposition~\ref{prop:power_kron} in matrix notation leads to
    \begin{equation}
        \frac{\mathbf{A}^k\mathbf{XW}^k}{\|\mathbf{A}^k\mathbf{XW}^k\|} = \beta_k\mathbf{v}_1^\mathbf{A}\left(\mathbf{v}_1^\mathbf{W}\right)^T + \mathbf{R}_k
    \end{equation}
    for $\mathrm{vec}(\mathbf{X}) = \mathbf{x}_0$ and some $\mathbf{R}_k$ with $\lim_{k\to\infty} \|\mathbf{R}_k\| = 0$ and $\beta_k = \frac{c_1}{|c_1|} \frac{\left(\frac{\lambda_1^{\mathbf{A}}\lambda_1^{\mathbf{W}}}{|\lambda_1^{\mathbf{A}}\lambda_1^{\mathbf{W}}|}\right)^k}{\|\mathbf{v}_1^\mathbf{W}\otimes\mathbf{v}_1^\mathbf{A}\|}\in\mathbb{R}$.
\end{remark}

Any graph convolution of this form amplifies the same signal across all feature columns, and the state gets closer to a rank one matrix, with each column becoming a multiple of $\mathbf{v}_1^\mathbf{A}$.
This phenomenon was termed rank collapse, but a definition is still missing~\citep{roth2023rank}. A comprehensive definition must consider the normalized representations and be independent of one specific vector. We introduce the following definition:

\begin{definition}(Rank Collapse)
    A sequence of matrices $\mathbf{X}^{(1)},\dots,\mathbf{X}^{(k)}\in\mathbb{R}^{n\times d}$ is said to suffer from \textbf{rank collapse} if there exists a sequence of rank-one matrices $\mathbf{Y}^{(1)},\dots,\mathbf{Y}^{(k)}\in\mathbb{R}^{n\times d}$ such that
    \begin{equation}
        \lim_{k\to\infty} \left\|\frac{\mathbf{X}^{(k)}}{\|\mathbf{X}^{(k)}\|} - \mathbf{Y}^{(k)}\right\| = 0
    \end{equation}
\end{definition}

It is commonly referred to as over-smoothing as they only consider stochastic aggregation functions or symmetrically normalized adjacency matrices.
Their eigenvector $\mathbf{v}_1^\mathbf{A}$ is a smooth vector for typical choices of $\mathbf{A}$, e.g., it is the vector of all ones $\mathbf{v}_1^\mathbf{A} = \mathbb{1}$ for the (weighted) mean aggregation, and $\mathbf{v}_1^\mathbf{A} = \mathbf{D}^{\frac{1}{2}}\mathbb{1}$ for the symmetrically normalized adjacency matrix~\citep{von2007tutorial}. We thus define over-smoothing as a special case of rank collapse:

\begin{definition}(Over-Smoothing)
    A sequence of matrices $\mathbf{X}^{(1)},\dots,\mathbf{X}^{(k)}\in\mathbb{R}^{n\times d}$ is said to suffer from \textbf{over-smoothing} if it suffers from rank collapse and the rank-one matrices are of the form $\mathbf{Y}^{(l)} = \mathbf{1}\mathbf{c}_{(l)}^T$ or $\mathbf{Y} = \mathbf{D}^{\frac{1}{2}}\mathbf{1}\mathbf{c}_{(l)}^T$ for any $\mathbf{c}_{(l)}\in\mathbb{R}^{d}$.
\end{definition}
To quantify the degree of over-smoothing, a frequently used metric is the Dirichlet energy~\citep{cai2020note} 
\begin{equation}
    E\left(\frac{\mathbf{X}}{\|\mathbf{X}\|}\right) = \mathrm{tr}\left(\frac{\mathbf{X}}{\|\mathbf{X}\|}\mathbf{\Delta}\frac{\mathbf{X}}{\|\mathbf{X}\|}\right)\, ,
\end{equation}
with $\mathbf{\Delta} = \mathbf{D} - \mathbf{A}$ or $\mathbf{\Delta} = \mathbf{I}_n - \mathbf{D}^{-1/2}\mathbf{A}\mathbf{D}^{-1/2}$, depending on case of over-smoothing. The Dirichlet energy converges to zero for methods utilizing the corresponding aggregation function as $\mathbf{v}_1$ is in the nullspace of $\mathbf{\Delta}$, i.e., $\mathbf{\Delta}\mathbf{v}_1 = \mathbf{0}$. However, this requires different $\mathbf{\Delta}$ for different aggregation functions, as $\mathbf{v}_1^\mathbf{A}$ may be different.

In order to quantify the more general phenomenon of rank collapse, we need to find a suitable sequence of rank-one matrices. The singular vectors corresponding to the largest singular value give the closest rank-one approximation of a given matrix. However, finding singular vectors is computationally expensive. As we are mainly interested in whether $\mathbf{X}$ converges to zero and not the exact value, we will instead utilize a row and column vector of the given $\mathbf{X}$. Any row and column will be sufficient if $\mathbf{X}$ is close to a rank one matrix. We consider the row and column vectors with the largest norm for numerical stability. This leads to our newly proposed rank-one distance metric:

\begin{definition}(Rank-one Distance (ROD))
Let $\mathbf{X}\in\mathbb{R}^{p\times q}$ be any matrix. We define the row with the largest norm as $\mathbf{v} = \max_i \|\mathbf{x}_i\|\in\mathbb{R}^p$ and the column index corresponding the largest norm as $j = \mathrm{arg}\max_i \|\mathbf{x}_{:,i}\|\in\mathbb{R}^q$. To account for the correct signs, we utilize the $j$-th column vector $\mathbf{u} = \mathbf{x}_{:,j}$ if $v_j > 0$ or its negated version $\mathbf{u} = -\mathbf{x}_{:,j}$ otherwise. The rank-one distance of $\mathbf{X}$ is then defined as
 \begin{equation}
\mathrm{ROD}(\mathbf{X}) = \left\|\frac{\mathbf{X}}{\|\mathbf{X}\|} - \frac{\mathbf{u}\mathbf{v}^T}{\|\mathbf{u}\mathbf{v}^T\|}\right\|     \, .
 \end{equation}
\end{definition}

As a generalization of over-smoothing, a Dirichlet energy of zero implies a ROD of zero, i.e., $E(\mathbf{X}/\|\mathbf{X}\|) = 0 \Rightarrow \mathrm{ROD}(\mathbf{X}) = 0$. Similarly to \citet{roth2023rank}, our theory also explains how to prevent over-smoothing and rank collapse. It needs to be ensured that the graph convolution is not a Kronecker product, i.e., a single aggregation and transformation matrix. One direction is to operate on multiple computational graphs $\mathbf{A}_1,\dots,\mathbf{A}_l$ with distinct feature transformations $\mathbf{W}_1,\dots,\mathbf{W}_l$:
\begin{equation}\begin{split}
\mathbf{S}\mathrm{vec}(\mathbf{X}) 
&= (\mathbf{W}_1\otimes\mathbf{A}_1 + \dots + \mathbf{W}_l\otimes\mathbf{A}_l)\mathrm{vec}(\mathbf{X}) \\
&= \mathrm{vec}(\mathbf{A}_1\mathbf{X}\mathbf{W}_1^T + \dots + \mathbf{A}_l\mathbf{X}\mathbf{W}_l^T)\, .
\end{split}
\end{equation}
As the eigenvectors of sums of Kronecker products can be linearly independent across the corresponding columns, different signals can get amplified for each feature column.

\section{Experimental Validation}
\begin{figure*}[tb]
         \centering
        \def\svgwidth{0.90\textwidth}
     \input{figures/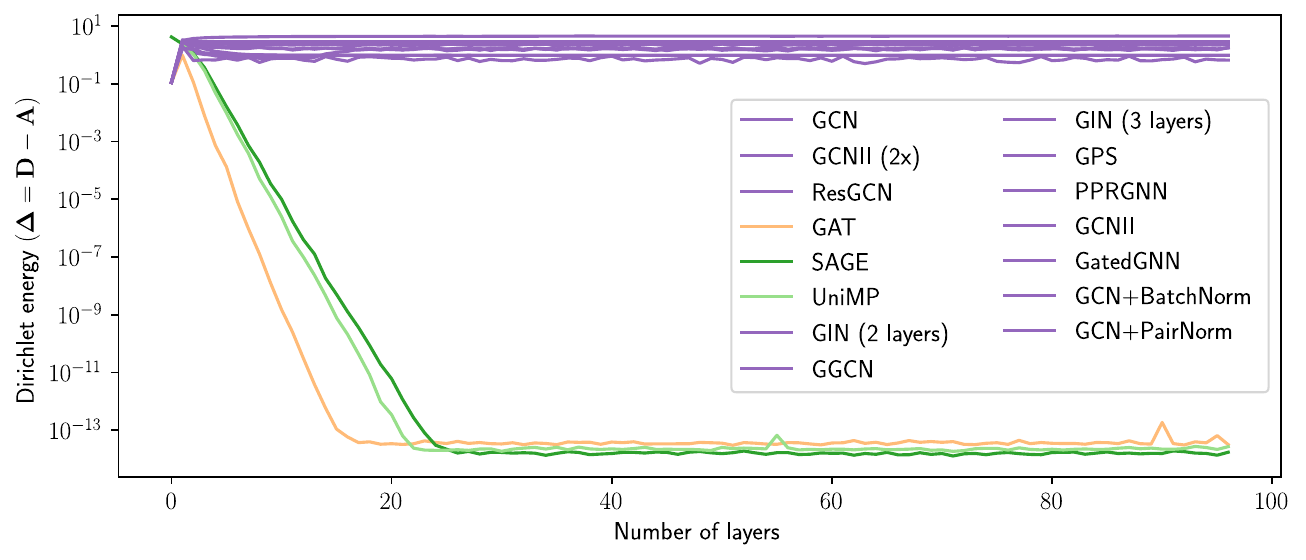_tex}
     \caption{Change in Dirichlet energy using the unnormalized graph Laplacian when increasing the number of layers for randomly initialized models. Mean values over $50$ random seeds.}
      \label{fig:dirichlet_unnorm}     
\end{figure*}
Given our novel definition and corresponding metric, we evaluate various well-established graph convolutions in terms of their ability to avoid rank collapse. Our implementation based on PyTorchGeometric~\citep{fey2019fast} is available online~\footnote{\href{https://github.com/roth-andreas/simplifying-over-smoothing}{https://github.com/roth-andreas/simplifying-over-smoothing}}.

\paragraph{Methods}
As base message-passing methods, we evaluate the dynamics of the graph convolutional network~\citep{kipf2017semisupervised} and the graph attention network (GAT)~\citep{veličković2018graph}. These are known to suffer from over-smoothing, so they also suffer rank collapse. 
The other methods we consider are not generally known to cause over-smoothing or are specifically designed to prevent it.  
While previous theory shows that negative edge weights and similarly residual connections can avoid over-smoothing~\citep{bo2021beyond,yan2022two,digiovanni2023understanding}, the connection to power iteration implies that these cannot prevent rank collapse. We evaluate the combination of the GCN and a residual connection (ResGCN), which adds the previous state to the output of each convolution. Similarly, we evaluate the SAGE convolution~\citep{hamilton2017inductive}, which additionally applies a linear transformation to the previous state. We also consider a commonly employed method that allows negative attention weights, namely the generalized GCN (GGCN)~\citep{yan2022two}.
\begin{figure*}[tb]
         \centering
        \def\svgwidth{0.95\textwidth}
     \input{figures/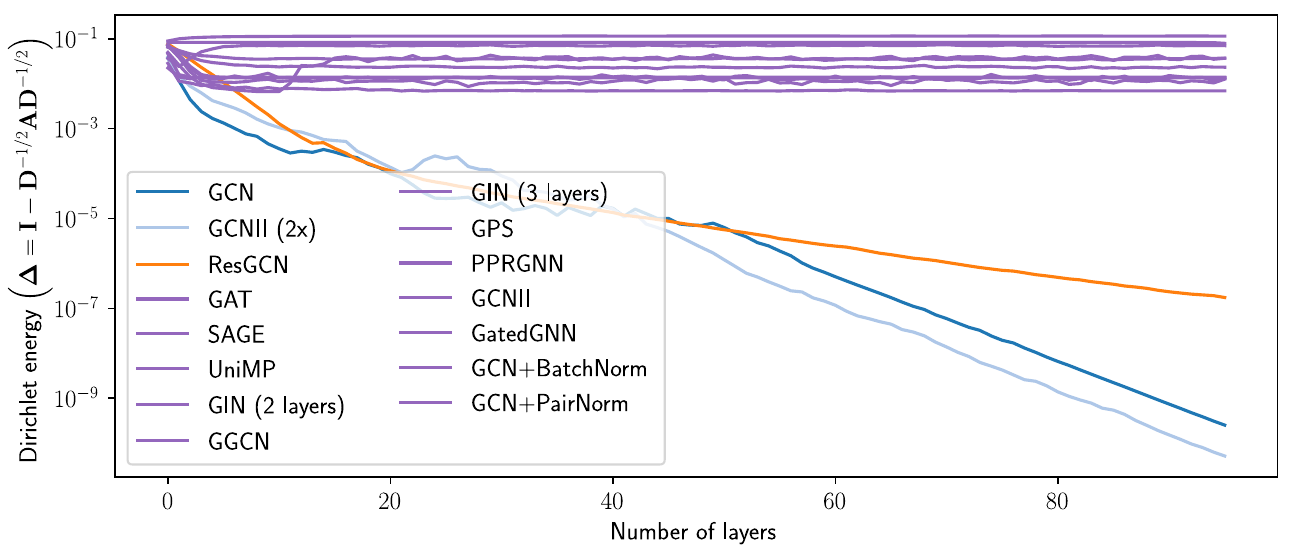_tex}
     \caption{Change in Dirichlet energy using the symmetrically normalized graph Laplacian when increasing the number of layers for randomly initialized models. Mean values over $50$ random seeds.}
      \label{fig:dirichlet_symm}     
\end{figure*}
\begin{figure*}[tb]
         \centering
        \def\svgwidth{0.95\textwidth}
     \input{figures/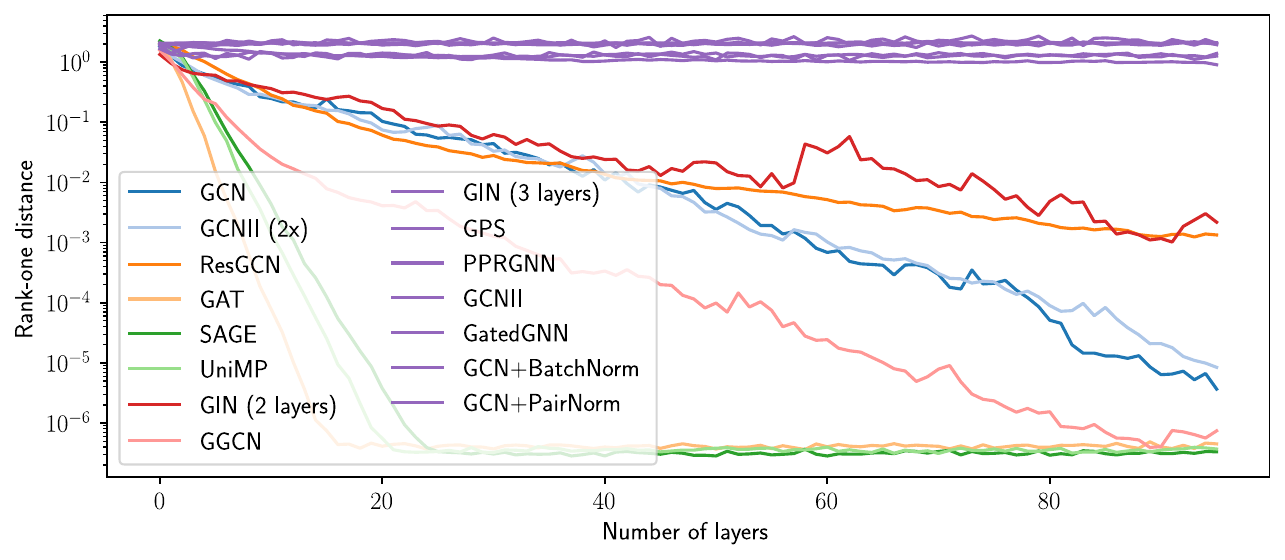_tex}
     \caption{Change of the rank-one distance when increasing the number of layers for randomly initialized models. Mean values over 50 random seeds. }
      \label{fig:rank_one}     
\end{figure*}
Another direction that aims to prevent over-smoothing is based on combining the output of each iteration with the initial state, referred to as an initial residual connection~\citep{chen2020simple} or a restart term~\citep{roth2022transforming}. We evaluate the GCNII~\citep{chen2020simple}. While previous work argued that this prevents over-smoothing for the unnormalized state. Given the critical importance of considering the normalized state, we want to evaluate whether this property still holds. As the magnitude of the representations may grow in each iteration, the constant influence of the initial state may become negligible.
Towards this end, we consider two versions: one with the regular parameter initialization (GCNII) and one for which all parameters are scaled by a factor of two (GCNII 2x).
As an alternative approach, we also consider the personalized page rank graph neural network (PPRGNN)~\citep{roth2022transforming}, a formulation that provably converges to a steady state with increased depth. 
Gating mechanisms allow a node to retain its representation by limiting the mixing with neighboring states using learnable gating functions. Here, we evaluate the gated graph neural network (GatedGNN)~\citep{yujia2016}. 
Normalization layers were also shown to prevent over-smoothing. Here, we evaluate PairNorm~\citep{Zhao2020PairNorm} (GCN+PairNorm) and BatchNorm~\citep{ioffe2015batch} (GCN+BatchNorm), combined with the GCN by their ability to mitigate rank collapse.
The graph isomorphism network~\citep{xu2018how} applies a multi-layer non-linear feature transformation to each state. While this was designed to allow for injective mappings of multisets to achieve maximal expressivity, this should allow resulting representations to be linearly independent. We evaluate a version using a multi-layer perceptron (MLP) with two layers (GIN 2 layers) and a version using an MLP with three layers (GIN 3 layers). As global methods, we consider the unified message passaging model (UniMP)~\citep{shi2021} and the general, powerful, scalable (GPS) graph transformer~\citep{rampavsek2022recipe} as two graph transformer variations.

\paragraph{Setting}
We track the rank-one distance for all methods on the KarateClub dataset~\citep{doi:10.1086/jar.33.4.3629752}, which is a small and undirected graph consisting of $34$ nodes and $156$ edges. We initialize each node with randomly assigned features following a normal distribution. For each method and for each iteration, the corresponding graph convolution followed by a ReLU-activation is applied. The rank-one distance is computed after each iteration. For direct comparison, we also compute the Dirichlet energy based on the unnormalized graph Laplacian $\bm{\Delta} = \mathbf{D} - \mathbf{A}$ and the symmetrically normalized graph Laplacian $\bm{\Delta} = \mathbf{I} - \mathbf{D}^{-1/2}\mathbf{A}\mathbf{D}^{-1/2}$. We repeat this process for $96$ iterations as several methods become too unstable at further depth, i.e., feature magnitudes explode or diminish. Runs for each method are repeated for $50$ random seeds. Mean values are reported.

\paragraph{Results}

The changes in the Dirichlet energy using the unnormalized graph Laplacian are shown in Figure~\ref{fig:dirichlet_unnorm}. This Dirichlet energy converges to zero only for GAT, SAGE, and UniMP, which all use weighted mean aggregation. In Figure~\ref{fig:dirichlet_symm}, we visualize the changes in Dirichlet energy using the symmetrically normalized graph Laplacian. For GCN, GCNII (2x), and ResGCN, this metric converges to zero as these methods utilize the symmetrically normalized adjacency matrix for aggregation.
In Figure~\ref{fig:rank_one}, we present the rank-one distance (ROD) for all methods. The results confirm that ROD converges to zero for all methods for which the Dirichlet energy with either graph Laplacian converges to zero. In addition, we find GIN (2 layers) and GGCN to suffer from rank collapse. While these methods use different aggregation functions, ROD captures the underlying issue of rank collapse. Different rates of convergence are notable, particularly between methods using the mean aggregation (GAT, SAGE, UniMP) and methods using other aggregation functions (GCN, GGCN, GCNII, ResGCN, and GIN). However, all these methods cause the rank to converge to one across all $50$ random initializations. Methods for which we do not observe rank collapse use normalization techniques (GCN+PairNorm and GCN+BatchNorm), prevent unlimited depth (PPRGNN, GCNII, and GatedGNN), and use non-linear feature transformations (GPS and GIN (3-layer)).

\section{Conclusion}
We have shown that over-smoothing is a special case of power iteration, with the dominant eigenvector of graph convolutions $\mathbf{W}\otimes\mathbf{A}$ taking the form $\mathbf{v}_1^\mathbf{W}\otimes\mathbf{v}_1^\mathbf{A}$. As given in power iteration, normalization is required, and the limit distribution is not always the constant vector, as it depends on the dominant eigenvector of $\mathbf{A}$. Based on our novel definition of rank collapse and the corresponding rank-one distance, we identified several methods suffering from rank collapse that were previously considered to prevent over-smoothing. We empirically found three general directions to prevent rank collapse: Normalization layers, limiting the effective depth of a model using a restart term or a gating mechanism, and complex non-linear feature transformations. These are promising avenues for further consideration. In addition, to solve rank collapse in the message-passing steps itself, the theory indicates that it needs to be ensured that the dominant eigenvector $\mathbf{v}_1^\mathbf{S}$ is not a simple Kronecker product so that it can amplify different signals across feature columns. 

\begin{ack}
  This research has been funded by the Federal Ministry of Education and Research of Germany under grant no. 01IS22094E WEST-AI. 
\end{ack}

\medskip

\bibliography{references}
\bibliographystyle{abbrvnat}


\newpage
\appendix
\section{Appendix}
\label{sec:appendix}
\begin{proposition}\label{prop:power_kron_app}(Power Iteration with a Kronecker Product) Let $\mathbf{S}=\mathbf{W}\otimes\mathbf{A}$ for $\mathbf{W}\in\mathbb{R}^{d\times d}$ and $\mathbf{A}\in\mathbb{R}^{n\times n}$ with $|\lambda_1^\mathbf{A}| > |\lambda_2^\mathbf{A}|$. Let $\mathbf{v}_1^\mathbf{A}$ be an eigenvector corresponding to $\lambda_1^\mathbf{A}$.
Further, let $\mathbf{x}_0\in\mathbb{R}^{n\cdot d}$ be any vector that has a non-zero component in the direction of a generalized eigenvector $\mathbf{v}_1^\mathbf{S}$ corresponding to $\lambda_1^\mathbf{S}$. Then,
    \begin{equation}
    \begin{split}
        \frac{(\mathbf{W}\otimes \mathbf{A})^k\mathbf{x}_0}{\|(\mathbf{W}\otimes \mathbf{A})^k\mathbf{x}_0\|} &= \beta_k\cdot\mathbf{u}\otimes\mathbf{v}_1^\mathbf{A} + \mathbf{r}_k
    \end{split}
    \end{equation}
    for some $\mathbf{r}_k\in\mathbb{R}^{n\cdot d}$ with $\lim_{k\to\infty}\|\mathbf{r}_k\| = 0$, bounded $\beta_k$, and some $\mathbf{u}\in\mathbb{R}^d$.
\end{proposition}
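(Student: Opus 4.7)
The plan is to extend Proposition~\ref{prop:power_kron} to the situation where $\lambda_1^\mathbf{W}$ may be repeated, lie in a nontrivial Jordan block, or share its magnitude with other eigenvalues of $\mathbf{W}$, while exploiting the simplicity of $\lambda_1^\mathbf{A}$ to confine the dominant behavior of the iterates to the subspace $U = \{\mathbf{u}\otimes\mathbf{v}_1^\mathbf{A} : \mathbf{u}\in\mathbb{R}^d\}$. The structural observation to establish first is that if $\{\mathbf{u}_i\}_{i=1}^{d}$ is a (generalized) eigenbasis of $\mathbf{W}$ and $\{\mathbf{w}_j\}_{j=1}^{n}$ one of $\mathbf{A}$ with $\mathbf{w}_1 = \mathbf{v}_1^\mathbf{A}$, then $\mathbf{u}_i \otimes \mathbf{w}_j$ lies in the generalized eigenspace of $\mathbf{S}$ for $\lambda_i^\mathbf{W}\lambda_j^\mathbf{A}$, and each Jordan chain of $\mathbf{W}$ tensored with $\mathbf{v}_1^\mathbf{A}$ is an $\mathbf{S}$-invariant chain inside $U$. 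This all follows directly from $(\mathbf{W}\otimes\mathbf{A})(\mathbf{u}\otimes\mathbf{w}) = \mathbf{W}\mathbf{u}\otimes\mathbf{A}\mathbf{w}$.

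Next I would decompose $\mathbf{x}_0 = \sum_{i,j} c_{ij}\,\mathbf{u}_i\otimes\mathbf{w}_j = T + R$, where $T$ collects the basis vectors $\mathbf{u}_i\otimes\mathbf{v}_1^\mathbf{A}$ with $|\lambda_i^\mathbf{W}\lambda_1^\mathbf{A}| = |\lambda_1^\mathbf{S}|$ and $R$ is the rest. Because $|\lambda_1^\mathbf{A}| > |\lambda_j^\mathbf{A}|$ for $j\ge 2$, every basis vector with $j\ge 2$ is associated with an eigenvalue of strictly smaller magnitude; the same is true of basis vectors with $j=1$ but $|\lambda_i^\mathbf{W}| < |\lambda_1^\mathbf{W}|$. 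Expanding $\mathbf{S}^k$ along each Jordan chain yields the standard polynomial-times-exponential growth $\binom{k}{\ell}\lambda^{k-\ell}$, so $\|\mathbf{S}^k T\|$ grows like $k^{m-1}|\lambda_1^\mathbf{S}|^k$ for some $m\ge 1$, while $\|\mathbf{S}^k R\|$ is of strictly smaller order. Consequently $\mathbf{r}_k := \mathbf{S}^k R/\|\mathbf{S}^k \mathbf{x}_0\|$ tends to zero in norm.

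Because $U$ is $\mathbf{S}$-invariant, $\mathbf{S}^k T = \tilde{\mathbf{u}}_k\otimes\mathbf{v}_1^\mathbf{A}$ for some $\tilde{\mathbf{u}}_k\in\mathbb{R}^d$. Setting $\mathbf{u} = \tilde{\mathbf{u}}_k/\|\tilde{\mathbf{u}}_k\|$ and $\beta_k = \|\tilde{\mathbf{u}}_k\|/\|\mathbf{S}^k\mathbf{x}_0\|$, and using the homogeneity $(c\mathbf{u})\otimes\mathbf{v} = c(\mathbf{u}\otimes\mathbf{v})$, produces the claimed decomposition. The nonzero-component hypothesis on $\mathbf{x}_0$ ensures $T\neq 0$, so that $\|\tilde{\mathbf{u}}_k\|$ grows at the same top rate $k^{m-1}|\lambda_1^\mathbf{S}|^k$ as $\|\mathbf{S}^k \mathbf{x}_0\|$, which pins $\beta_k$ between two positive constants.

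The main obstacle I anticipate is careful bookkeeping across possibly co-dominant eigenvalues: when several $\lambda_i^\mathbf{W}$ share the top magnitude but differ in complex phase, or when $\lambda_1^\mathbf{W}$ sits in a nontrivial Jordan block, the unit direction $\tilde{\mathbf{u}}_k/\|\tilde{\mathbf{u}}_k\|$ may oscillate in $k$ rather than converge, and the polynomial exponent $m-1$ must be identified precisely so that it cancels between the numerator and the denominator. This is exactly why the statement only asserts boundedness of $\beta_k$ and permits $\mathbf{u}$ to depend on $k$, and why Proposition~\ref{prop:power_kron} instead imposes the strict separation $|\lambda_1^\mathbf{S}| > |\lambda_2^\mathbf{S}|$, which rules out such oscillations. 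Working throughout inside the finite-dimensional invariant subspace $U\cong\mathbb{R}^d$ reduces the top-direction analysis to a concrete estimate on powers of a single Jordan block and keeps the argument manageable.
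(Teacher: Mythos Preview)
Your approach is essentially the same as the paper's: both expand $\mathbf{x}_0$ in a generalized eigenbasis of $\mathbf{S}=\mathbf{W}\otimes\mathbf{A}$, identify the dominant growth scale $\binom{k}{p-1}|\lambda_1^\mathbf{S}|^{k}$ coming from the largest Jordan blocks, and use that every surviving generalized eigenvector has the form $\mathbf{u}_i\otimes\mathbf{v}_1^\mathbf{A}$ because $\lambda_1^\mathbf{A}$ is simple. The only organizational difference is that the paper normalizes $\mathbf{J}^k$ directly by $q_k=\binom{k}{p-1}(\lambda_1^{\mathbf{S}})^{k-(p-1)}$ and reads off a fixed $\mathbf{u}$ from the limit, whereas you phrase the same computation via the invariant subspace $U$ and let $\mathbf{u}$ vary with $k$; your remarks on co-dominant phases and Jordan sizes are exactly the bookkeeping the paper handles in its displayed limit of $(q_k^{-1}\mathbf{J})^k$.
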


\begin{proof}
This proof is similar to the proof of Proposition~\ref{prop:power}. However, $\pm\lambda_1^\mathbf{S}$ may occur multiple times, so there can be multiple Jordan blocks corresponding to $\pm\lambda_1^\mathbf{S}$, and they can have a size larger than one. Let $p$ be the size of the largest Jordan block corresponding to $\lambda_1^\mathbf{S}$. Then, $\mathbf{J}^k$ will be dominated by $q_k = \binom{k}{p-1}\lambda_1^{S^{k-(p-1)}}$:
\begin{equation}
        \lim_{k\to\infty}\left(\frac{1}{q_k}\mathbf{J}\right)^k = \begin{bmatrix}
            0 & \dots & 0 & 1 & & & \\
             & \ddots & & 0 & & & \\
            \vdots & & & \vdots & & \\
            0 & \dots & & 0 & & \\ 
             & & & & \ddots & \\
             & & & & & 0 & \dots & 0 & 1 \\
             & & & & & & \ddots & & 0 \\
             & & & & & \vdots & & & \vdots \\
             & & & & & 0 & \dots & & 0 \\
             & & & & & & & & & 0 \\
             & & & & & & & & & & \ddots \\
             & & & & & & & & & & & 0
        \end{bmatrix}\, .
    \end{equation}
    The number of blocks containing a $1$ is determined by the number of Jordan blocks corresponding to $\pm\lambda_1^{\mathbf{S}}$ with size $p$. Let there be $i$ such blocks. We further know that all corresponding generalized eigenvectors are of the form $\mathbf{v}_{i\cdot p}^\mathbf{S} = \mathbf{v}_{i\cdot p}^\mathbf{W}\otimes\mathbf{v}_1^\mathbf{W}$. For eigenvalues constructed with $\lambda_2^\mathbf{A}$ it holds that $\lambda_2^\mathbf{A}\lambda_{i\cdot p}<\lambda_1^\mathbf{A}\lambda_{i\cdot p}$. This lets us simplify the statement:
    \begin{equation}
    \begin{split}
    \left(\frac{q_k}{|q_k|}\right)^k\frac{\mathbf{V}(\frac{1}{q_k}\mathbf{J})^k(c_1\mathbf{e}_1 + \dots c_n\mathbf{e}_n)}{\|\mathbf{V}(\frac{1}{q_k}\mathbf{J})^k(c_1\mathbf{e}_1 + \dots c_n\mathbf{e}_n)\|} 
    &= \left(\frac{q_k}{|q_k|}\right)^k \frac{c_{1\cdot p}\mathbf{v}_{1\cdot p}^\mathbf{S}+\dots +c_{i\cdot p}\mathbf{v}_{i\cdot p}^\mathbf{S}}{\|c_{1\cdot p}\mathbf{v}_{1\cdot p}^\mathbf{S}+\dots +c_{i\cdot p}\mathbf{v}_{i\cdot p}^\mathbf{S}\|} + \mathbf{r}_k \\
    &= \left(\frac{q_k}{|q_k|}\right)^k \frac{b\mathbf{u}\otimes\mathbf{v_1}^\mathbf{A}}{\|b\mathbf{u}\otimes\mathbf{v_1}^\mathbf{A}\|} + \mathbf{r}_k
    \end{split}
    \end{equation}
    for $b=c_{1\cdot p}\cdot\dots\cdot c_{i\cdot p}$, $\mathbf{u} = \mathbf{v}_{1\cdot p}+\dots +\mathbf{v}_{i\cdot p}$, and $\lim_{k\to\infty}\|\mathbf{r}_k\| = 0$ which converges to $\frac{\mathbf{v}_1}{\|\mathbf{v}_1\|}$ iff $\lambda_1 > 0$. Setting $\beta_k = \left(\frac{q_k}{|q_k|}\right)^k \frac{1}{\|b\mathbf{u}\otimes\mathbf{v_1}^\mathbf{A}\|}$ leads to Proposition~\ref{prop:power_kron}.
\end{proof}


\end{document}